\documentclass[runningheads,a4paper,11pt]{llncs}

\usepackage[parfill]{parskip}
\usepackage{amsmath,amssymb,bbm}
\usepackage{mathtools}
\usepackage{cases}

\usepackage[english]{babel}
\usepackage{dsfont}

\usepackage{microtype}
\usepackage{comment}
\usepackage{subfigure}
\usepackage{algorithm,algorithmic}
\usepackage{color}
\usepackage{appendix}

\usepackage{authblk}

\usepackage{url}
\usepackage[numbers]{natbib}
\usepackage[colorlinks,citecolor=blue,urlcolor=blue,linkcolor=blue,linktocpage=true]{hyperref}
\pdfstringdefDisableCommands{\def\Cref#1{#1}}

\setlength{\bibsep}{1pt}

\usepackage{cleveref}
\crefformat{equation}{(#2#1#3)}
\crefrangeformat{equation}{(#3#1#4) to~(#5#2#6)}
\crefname{equation}{}{}
\Crefname{equation}{}{}

\crefname{definition}{\textbf{definition}}{definitions}
\Crefname{definition}{Definition}{Definitions}
\crefname{assumption}{\textbf{assumption}}{assumptions}
\Crefname{assumption}{Assumption}{Assumptions}


\definecolor{maroon}{RGB}{192,80,77}

\newtheorem{thm}{Theorem}
\newtheorem{lem}[thm]{Lemma}

\newtheorem{cor}[thm]{Corollary}
\newtheorem{defi}[thm]{Definition}




\newcommand{\argmin}{\mathop{\mathrm{argmin}}}
\newcommand{\argmax}{\mathop{\mathrm{argmax}}}

\def\E{\mathbb{E}}
\def\P{\mathbb{P}}

\def\R{\mathbb{R}}
\def\cA{\mathcal{A}}
\def\cB{\mathcal{B}}
\def\cD{\mathcal{D}}

\def\cH{\mathcal{H}}
\def\cL{\mathcal{L}}

\def\cX{\mathcal{X}}
\def\cY{\mathcal{Y}}
\def\cZ{\mathcal{Z}}

\title{On-Average KL-Privacy and its equivalence to Generalization for Max-Entropy Mechanisms}
\author{Yu-Xiang Wang$^{1,2}$ \and Jing Lei$^{2}$ \and Stephen E. Fienberg$^{1,2}$}
\institute{$^1$Machine Learning Department, $^2$Department of Statistics,  and the Heinz College,\\
	Carnegie Mellon University, Pittsburgh, PA 15213, USA}

\authorrunning{Yu-Xiang Wang, Jing Lei and Stephen E. Fienberg}
\titlerunning{On-Average KL-Privacy and its equivalence to Generalization}

\begin{document}

\maketitle
\begin{abstract}
	We define On-Average KL-Privacy and present its properties and connections to differential privacy, generalization and information-theoretic quantities including max-information and mutual information. The new definition significantly weakens differential privacy, while preserving its minimalistic design features such as composition over small group and multiple queries as well as closeness to post-processing. Moreover, we show that On-Average KL-Privacy is \emph{equivalent} to generalization for a large class of commonly-used tools in statistics and machine learning that samples from Gibbs distributions---a class of distributions that arises naturally from the maximum entropy principle. In addition, a byproduct of our analysis yields a lower bound for generalization error in terms of mutual information which reveals an interesting interplay with known upper bounds that use the same quantity.
	\keywords{differential privacy, generalization, stability, information theory,  maximum entropy, statistical learning theory, }
\end{abstract}


\vspace{-.5cm}
\section{Introduction}
\vspace{-.5cm}

Increasing privacy concerns have become a major obstacle for collecting, analyzing and sharing data, as well as communicating results of a data analysis in sensitive domains. For example, the second Netflix Prize competition was canceled in response to a lawsuit and Federal Trade Commission privacy concerns, and the National Institute of Health decided in August 2008 to remove aggregate Genome-Wide Association Studies (GWAS) data from the public web site, after learning about a potential privacy risk. These concerns are well-grounded in the context of the Big-Data era as stories about privacy breaches from improperly-handled data set appear very regularly (e.g., medical records~\citep{privacybreach1}, Netflix~\citep{narayanan2008robust},  NYC Taxi~\citep{privacybreach2}). These incidences highlight the need for formal methods that provably protects the privacy of individual-level data points while allowing similar database level of utility comparing to the non-private counterpart. 

There is a long history of attempts to address these problems and the risk-utility tradeoff in statistical agencies \citep{duncanfienberg,hundepool,duncanetal} but most of the methods developed do not provide clear and quantifiable privacy guarantees.   Differential privacy \citep{DworkMNS06,dwork2006differential} succeeds in the first task.  While  it allows a clear  quantification of the privacy loss, it provides a worst-case guarantee and in practice it often requires adding noise with a very large magnitude (if finite at all), hence resulting in unsatisfactory utility, cf., \citep{uhler,yu2014scalable,yang}.


A growing literature focuses on weakening the notion of differential privacy to make it applicable  and for a more favorable privacy-utility trade-off. Popular attempts include $(\epsilon,\delta)$-approximate differential privacy \citep{dwork2006our}, personalized differential privacy \citep{ebadi2015personal,liu2015fast}, random differential privacy \citep{hall2011random} and so on. They each have pros and cons and are useful in their specific contexts.  There is a related literature addressing the folklore observation that ``differential privacy implies generalization'' \citep{dwork2014preserving,hardt2014preventing,steinke2014interactive,dwork2015generalization,bassily2015algorithmic,wang2015learning}. 

The implication of generalization is a minimal property that we feel any notion of privacy should have. This brings us to the natural question:
\begin{itemize}
	\item	Is there a weak notion of privacy that is equivalent to generalization?
\end{itemize}
In this paper, we provide a partial answer to this question. Specifically, we define On-Average Kullback-Leibler(KL)-Privacy and show that it characterizes On-Average Generalization\footnote{We will formally define these quantities.} for algorithms that draw sample from an important class of maximum entropy/Gibbs distributions, i.e., distributions with probability/density proportional to
\begin{equation*}
\exp(-\cL(\text{Output},\text{Data}))\pi(\text{Output})
\end{equation*}
for a loss function $\cL$ and (possibly improper) prior distribution $\pi$.

 We argue that this is a fundamental class of algorithms that covers a big portion of tools in modern data analysis including Bayesian inference, empirical risk minimization in statistical learning as well as the private releases of database queries through Laplace and Gaussian noise adding. From here onwards, we will refer this class of distributions ``MaxEnt distributions'' and the algorithm that output a sample from a MaxEnt distribution ``posterior sampling''.
\vspace{-.5cm}
\paragraph{Related work:}
This work is closely related to the various notions of algorithmic stability in learning theory \citep{kearns1999algorithmic,bousquet2002stability,mukherjee2006learning,shalev2010learnability}. In fact, we can treat differential privacy  as a very strong notion of stability.  Thus On-average KL-privacy may well be called On-average KL-stability. Stability implies generalization in many different settings but they are often only sufficient conditions. Exceptions include \citep{mukherjee2006learning,shalev2010learnability} who show that notions of stability are also necessary for the consistency of empirical risk minimization and distribution-free learnability of any algorithms. Our specific stability definition, its equivalence to generalization and its properties as a privacy measure has not been studied before. KL-Privacy first appears in \citep{barber2014privacy} and is shown to imply generalization in \citep{bassily2015algorithmic}. On-Average KL-privacy further weakens KL-privacy. A high-level connection can be made to leave-one-out cross validation which is often used as a (slightly biased) empirical measure of generalization, e.g., see \citep{mosteller1968data}.

\vspace{-.5cm}
\section{Symbols and Notation}
\vspace{-.5cm}
We will use the standard statistical learning terminology where $z\in \cZ$ is a data point, $h\in\cH$ is a hypothesis and $\ell: \cZ\times\cH \rightarrow \R$ is the loss function. One can think of the negative loss function as a measure of utility of $h$ on data point $z$. Lastly, $\cA$ is a possibly randomized algorithm that maps a data set $Z\in\cZ^n$ to some hypothesis $h\in\cH$. For example, if $\cA$ is the empirical risk minimization (ERM), then $\cA$ chooses $h^*= \argmin_{h\in \cH}\sum_{i=1}^n  \ell(z_i,h)$. 

Just to point out that many data analysis tasks can be casted in this form, e.g., in linear regression, $z_i=(x_i,y_i)\in \R^d\times\R$, $h$ is the coefficient vector and $\ell$ is just $\|y_i-x_i^Th\|^2$; in k-means clustering, $z\in \R^d$ is just the feature vector, $h=\{h_1,...,h_k\}$ is the collection of $k$-cluster centers and $\ell(z,h)=\min_{j}\|z - h_j\|^2$. Simple calculations of statistical quantities can often be represented in this form too, e.g., calculating the mean is equivalent to linear regression with identity design, and calculating the median is the same as ERM with loss function $|z-h|$.

We also consider cases when the loss function is defined over the whole data set $Z\in \cZ$, in this case the loss function is also evaluated on the whole data set by the structured loss $\mathcal{L}: h\times \cZ\rightarrow \R$. We do not require $Z$ to be drawn from some product distribution, but rather any distribution $D$. Generally speaking, $Z$ could be a string of text, a news article, a sequence of transactions of a credit card user, or rather just the entire data set of $n$ iid samples. We will revisit this generalization with more concrete examples later. However we would like to point out that this is equivalent to the above case when we only have one (much more complicated) data point and the algorithm $\cA$ is applied to only one sample.

\vspace{-1em}
\section{Main Results}\label{sec:mainresults}
\vspace{-1em}
We first describe differential privacy and then it will become very intuitive where KL-privacy and On-Average KL-privacy come from. Roughly speaking, differential privacy requires that for any datasets $Z$ and $Z^\prime$ that differs by only one data point, the algorithm $\cA(Z)$ and $\cA(Z')$ samples output $h$ from two distributions that are very similar to each other.
Define ``Hamming distance''
\begin{equation}\label{eq:Z_distance}
d(Z,Z^\prime):= \# \{i=1,...,n: z_i\neq z_i^{\prime}\}\,.
\end{equation}

\begin{defi}[$\epsilon$-Differential Privacy \citep{dwork2006differential}]\label{def:diff_privacy}
	We call an algorithm $\cA$ $\epsilon$-differentially private (or in short $\epsilon$-DP), if
	$$
	\P(\cA(Z)\in H)\leq \exp(\epsilon)\P(\cA(Z^{\prime})\in H)
	$$
	for $\forall\ Z,\ Z^{\prime}$ obeying $d(Z,Z^{\prime})=1$ and any measurable subset $H\subseteq \cH$.
\end{defi}
More transparently, assuming the range of $\cA$ is the whole space $\cH$, and also assume $\cA(Z)$ defines a density on $\cH$ with respect to a base measure on $\cH$\footnote{These assumptions are only for presentation simplicity.  The notion of On-Average KL-privacy can naturally handle mixture of densities and point masses.}, then
$\epsilon$-Differential Privacy requires
$$
\sup_{Z,Z': d(Z,Z')\leq 1}\sup_{h\in\cH} \log\frac{p_{\cA(Z)}(h)}{p_{\cA(Z')}(h)}  \leq \epsilon.
$$
Replacing the second supremum with an expectation over $h\sim \cA(Z)$ we get the maximum KL-divergence over the output from two adjacent datasets. This is KL-Privacy as defined in \citet{barber2014privacy}, and by replacing both supremums with expectations we get what we call On-Average KL-Privacy.
For $Z\in\mathcal Z^n$ and $z\in\mathcal Z$, denote $[Z_{-1},z]\in\mathcal Z^n$ the data set obtained from replacing the first entry of $Z$ by $z$.
Also recall that the KL-divergence between two distributions $F$ and $G$ is $D_{\rm KL}(F\| G)=\mathbb E_F \frac{dF}{dG}$.
\begin{defi}[On-Average KL-Privacy]
	We say $\cA$ obeys $\epsilon$-On-Average KL-privacy for some distribution $\cD$ if 
		$$
		\E_{Z\sim \cD^n,z\sim \cD}\mathrm{D}_\mathrm{KL}(\cA(Z)\|\cA([Z_{-1},z])) \leq \epsilon.
		$$
\end{defi}
Note that by the property of KL-divergence, the On-Average KL-Privacy is always nonnegative and is $0$ if and only if the two distributions are the same almost everywhere. In the above case, it happens when $z=z'$.

Unlike differential privacy that provides a uniform privacy guarantee for any users in $\cZ$, on-average KL-Privacy is a distribution-specific quantity that measures the amount of average privacy loss of an average data point $z\sim\cD$ suffer from running data analysis $\cA$ on an data set $Z$ drawn iid from the same distribution $\cD$. 

We argue that this kind of average privacy protection is practically useful because it is able to adapt to benign distributions and is much less sensitive to outliers. After all, when differential privacy fails to provide a meaningful $\epsilon$ due to peculiar data sets that exist in $\cZ^n$ but rarely appear in practice, we would still be interested to gauge how a randomized algorithm protects a typical user's privacy.


Now we define what we mean by \emph{generalization}. Let the empirical risk $\hat{R}(h,Z) = \frac{1}{n}\sum_{i=1}^n\ell(h,z_i)$ and the actual risk be $R(h) =  \E_{z\sim \cD}\ell(h,z)$.
\begin{defi}[On-Average Generalization]
	We say an algorithm $\cA$ has on-average generalization error $\epsilon$ if 
	$\left|\E R(\cA(Z)) - \E \hat{R}(\cA(Z),Z)\right| \leq \epsilon$.
\end{defi}
This is slightly weaker than the standard notion of generalization in machine learning which requires $\E|R(\cA(Z))-\hat{R}(\cA(Z),Z)| \leq \epsilon$. Nevertheless, on-average generalization is sufficient for the purpose of proving consistency for methods that approximately minimizes the empirical risk.



%
\vspace{-1em}
\subsection{The equivalence to generalization}
\vspace{-.5cm}
It turns out that when $\cA$ assumes a special form, that is, sampling from a Gibbs distribution, we can completely characterize generalization of $\cA$ using On-Average KL-Privacy. This class of algorithms include the most general mechanism for differential privacy --- exponential mechanism \citep{mcsherry2007mechanism}, which casts many other noise adding procedures as special cases. We will discuss a more compelling reason why restricting our attention to this class is not limiting in Section~\ref{sec:maxent}.
\begin{thm}[On-Average KL-Privacy $\Leftrightarrow$ Generalization]\label{thm:characterization}
	Let the loss function $\ell(z,h) = -\log p(z|h)$ for some model $p$ parameterized by $h$, and let 
	$$\cA(Z):  h \sim p(h|Z) \propto \exp\left(-\sum_{i=1}^n\ell(z,h) - r(h)\right).$$
	If in additional $\cA(Z)$ obeys that for every $Z$, the distribution $p(h|Z)$ is well-defined (in that the normalization constant is finite), then $\cA$ satisfy $\epsilon$-On-Average KL-Privacy \emph{if and only if} $\cA$ has on-average generalization error $\epsilon$.
\end{thm}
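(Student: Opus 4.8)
The plan is to prove something stronger than the stated equivalence, namely an \emph{exact identity}: for this class of Gibbs mechanisms the average KL-privacy loss equals the \emph{signed} on-average generalization gap $\E[R(\cA(Z))]-\E[\hat{R}(\cA(Z),Z)]$, and this quantity is automatically nonnegative; the ``if and only if'' then follows because the two $\epsilon$-conditions become literally the same inequality. First I would write the posterior density explicitly. Since $\ell(z,h)=-\log p(z|h)$ we have $\exp(-\ell(z,h))=p(z|h)$, so
$$p(h\mid Z)=\frac{1}{N(Z)}\exp(-r(h))\prod_{i=1}^{n}p(z_i\mid h),\qquad N(Z):=\int_{\cH}\exp(-r(h))\prod_{i=1}^{n}p(z_i\mid h)\,dh,$$
with $N(Z)<\infty$ by hypothesis; $p(h\mid[Z_{-1},z])$ has the same form with the factor $p(z_1\mid h)$ replaced by $p(z\mid h)$. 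The prior factor $\exp(-r(h))$ and the shared factors $p(z_i\mid h)$, $i\ge 2$, cancel in the log-density ratio, leaving
$$\log\frac{p(h\mid Z)}{p(h\mid[Z_{-1},z])}=\log\frac{N([Z_{-1},z])}{N(Z)}+\log p(z_1\mid h)-\log p(z\mid h)=\log\frac{N([Z_{-1},z])}{N(Z)}+\ell(z,h)-\ell(z_1,h).$$

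Next I would take the expectation over $h\sim\cA(Z)$, which by the definition of KL turns the above into
$$D_{\mathrm{KL}}\!\big(\cA(Z)\,\|\,\cA([Z_{-1},z])\big)=\log\frac{N([Z_{-1},z])}{N(Z)}+\E_{h\sim\cA(Z)}\big[\ell(z,h)-\ell(z_1,h)\big],$$
and then the outer expectation over $Z\sim\cD^n$ and the independent $z\sim\cD$, evaluating the three pieces separately. Because $z$ is an independent copy, $[Z_{-1},z]$ is itself an i.i.d.\ sample of size $n$ from $\cD$, hence $[Z_{-1},z]\stackrel{d}{=}Z$ and the log-partition terms cancel in expectation. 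By independence of $z$ and $Z$, $\E_{Z,z}\E_{h\sim\cA(Z)}[\ell(z,h)]=\E_Z\E_{h\sim\cA(Z)}[R(h)]=\E[R(\cA(Z))]$. Finally, $p(h\mid Z)$ is symmetric in $z_1,\dots,z_n$ and the $z_i$ are exchangeable, so $\E_Z\E_{h\sim\cA(Z)}[\ell(z_1,h)]=\frac1n\sum_{i=1}^n\E_Z\E_{h\sim\cA(Z)}[\ell(z_i,h)]=\E[\hat{R}(\cA(Z),Z)]$. Combining the three pieces yields the identity
$$\E_{Z\sim\cD^n,\,z\sim\cD}\,D_{\mathrm{KL}}\!\big(\cA(Z)\,\|\,\cA([Z_{-1},z])\big)=\E[R(\cA(Z))]-\E[\hat{R}(\cA(Z),Z)].$$

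Since the left side is an average of KL divergences it is nonnegative; therefore the signed generalization gap is $\ge 0$ for every Gibbs mechanism of this form, the absolute value in the definition of on-average generalization is vacuous, and $\E D_{\mathrm{KL}}\le\epsilon$ holds if and only if $|\E[R(\cA(Z))]-\E[\hat{R}(\cA(Z),Z)]|\le\epsilon$. That is exactly the claimed equivalence.

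The routine part is the density bookkeeping in the first paragraph. The step I expect to need the most care is the pair of ``expectation swaps'': justifying that $[Z_{-1},z]\stackrel{d}{=}Z$ (so the otherwise intractable log-normalizer terms cancel exactly, with no need to bound $r$ or $\cH$), and the exchangeability-plus-symmetry argument that converts $\E_h[\ell(z_1,h)]$ into the empirical risk. Alongside this, the measure-theoretic fine print should be handled explicitly---Fubini to interchange the $h$-integral with the $(Z,z)$-integral, and the possibility that $D_{\mathrm{KL}}$ (equivalently $R(h)=\E_z[-\log p(z\mid h)]$) is $+\infty$ on a non-null set, in which case both sides of the identity are $+\infty$ together---most cleanly by reading the identity in $[0,+\infty]$ and noting that finiteness of $N(\cdot)$, assumed in the statement, excludes the degenerate situations of interest.
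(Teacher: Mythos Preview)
Your proposal is correct and follows essentially the same route as the paper's proof: both arguments reduce to the identity that the average KL equals the signed generalization gap, using (i) the cancellation of the shared factors in the log-density ratio to leave $\ell(z,h)-\ell(z_1,h)$ plus a log-partition difference, (ii) the fact that $[Z_{-1},z]\stackrel{d}{=}Z$ to kill the log-partition terms in expectation, and (iii) nonnegativity of KL to drop the absolute value. The only cosmetic differences are that the paper starts from the generalization side and introduces a full ghost sample $Z'$ averaged over all $n$ indices before collapsing by symmetry, whereas you start from the KL side, work at index $1$ throughout, and invoke exchangeability once at the end; your write-up is also more explicit about the Fubini and $+\infty$ caveats than the paper's.
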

The proof, given in the Appendix, uses a ghost sample trick and the fact that the expected normalization constants of the sampling distribution over $Z$ and $Z'$ are the same.
\vspace{-0.5em}
\begin{remark}[Structural Loss]
	Take $n=1$, and loss function be $\cL$. Then for an algorithm $\cA$ that samples with probability proportional to $\exp(-\cL(h,Z) - r(h))$:	
	$\epsilon$-On-Average KL-Privacy is equivalent to $\epsilon$-generalization of the structural loss.
\end{remark}
\vspace{-0.5em}
\begin{remark}[Dispersion parameter $\gamma$]
	The case when $\cA \propto \exp(-\gamma[\cL(h,Z) - r(h)])$ for a constant $\gamma$ can be handled by redefining $\cL' = \gamma\cL$. In that case, $\epsilon_\gamma$-On-Average KL-Privacy with respect to $\cL'$ implies $\epsilon_{\gamma}/\gamma$ generalization with respect to $\cL$. For this reason, larger $\gamma$ may not imply strictly better generalization. 
\end{remark}
\vspace{-0.5em}
\begin{remark}[Comparing to differential Privacy]
	Note that here we do not require $\ell$ to be uniformly bounded, but if we do, i.e. $\sup_{z\in\cZ,h\in\cH}|\ell(z,h)|\leq B$, then the same algorithm $\cA$ above obeys $4B\gamma$-Differential Privacy~\cite{mcsherry2007mechanism,wang2015} and it implies $O(B\gamma)$-generalization. This, however, could be much larger than the actual generalization error (see our examples in Section~\ref{sec:exp}).
\end{remark}
	
	\vspace{-1em}
\subsection{Preservation of other properties of DP}
\vspace{-0.5em}
We now show that despite being much weaker than DP, On-Average KL-privacy does inherent some of the major properties of differential privacy (under mild additional assumptions in some cases).
\begin{lem}[Closeness to Post-processing]
	Let $f$ be any (possibly randomized) measurable function from $\cH$ to another domain $\cH'$, then for any $Z,Z'$
	$$
	\text{D}_{\mathrm{KL}}(f(\cA(Z))\| f(\cA(Z'))) \leq \text{D}_{\mathrm{KL}}(\cA(Z)\| \cA(Z')). 
	$$
\end{lem}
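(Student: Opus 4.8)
The plan is to recognize this lemma as the data-processing inequality for relative entropy and reprove it, for completeness, via the chain rule (additivity) of KL-divergence. Write $P=\cA(Z)$ and $Q=\cA(Z')$. First I would represent the (possibly randomized) map $f$ by a Markov kernel $K(h,\cdot)$ on $\cH'$ --- namely the conditional law of $f(h)$ given the input $h\in\cH$ --- so that the law of $f(\cA(Z))$ is the mixture $PK:=\int_{\cH}K(h,\cdot)\,P(dh)$ and, similarly, $f(\cA(Z'))$ has law $QK$; a deterministic $f$ is the special case $K(h,\cdot)=\delta_{f(h)}$. If $P\not\ll Q$ then $\mathrm{D}_{\mathrm{KL}}(P\|Q)=+\infty$ and the inequality is trivial, so I may assume $P\ll Q$, in which case $PK\ll QK$ and all the relative entropies below are well defined.

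Next I would form two couplings on the product space $\cH\times\cH'$: let $\mu(dh,dh')=P(dh)\,K(h,dh')$ and $\nu(dh,dh')=Q(dh)\,K(h,dh')$. Applying the chain rule of relative entropy by conditioning on the first coordinate,
\[
\mathrm{D}_{\mathrm{KL}}(\mu\|\nu)\;=\;\mathrm{D}_{\mathrm{KL}}(P\|Q)+\E_{h\sim P}\,\mathrm{D}_{\mathrm{KL}}\!\big(K(h,\cdot)\,\|\,K(h,\cdot)\big)\;=\;\mathrm{D}_{\mathrm{KL}}(P\|Q),
\]
since the conditional kernel is the same under $\mu$ and $\nu$. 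Applying the chain rule instead on the second coordinate, whose marginals under $\mu$ and $\nu$ are exactly $PK$ and $QK$,
\[
\mathrm{D}_{\mathrm{KL}}(\mu\|\nu)\;=\;\mathrm{D}_{\mathrm{KL}}(PK\|QK)+\E_{h'\sim PK}\,\mathrm{D}_{\mathrm{KL}}\!\big(\mu(\cdot\mid h')\,\|\,\nu(\cdot\mid h')\big)\;\ge\;\mathrm{D}_{\mathrm{KL}}(PK\|QK),
\]
using only nonnegativity of KL-divergence. Combining the two displays gives $\mathrm{D}_{\mathrm{KL}}(f(\cA(Z))\|f(\cA(Z')))=\mathrm{D}_{\mathrm{KL}}(PK\|QK)\le\mathrm{D}_{\mathrm{KL}}(\mu\|\nu)=\mathrm{D}_{\mathrm{KL}}(\cA(Z)\|\cA(Z'))$, as claimed.

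The main obstacle is purely measure-theoretic: making the two chain-rule decompositions rigorous when $\cH$ and $\cH'$ are general (not discrete, not mutually dominated) requires the existence of the regular conditional distributions $\mu(\cdot\mid h')$, $\nu(\cdot\mid h')$ together with a Radon--Nikodym bookkeeping to verify the additive identity. On standard Borel spaces these conditional laws exist and the identity is classical, so I would cite it; alternatively, to be fully self-contained, I would first handle deterministic $f$ via the monotonicity of relative entropy under coarsening of the $\sigma$-algebra (take $\mathcal G=f^{-1}(\text{Borel sets of }\cH')$ and invoke the Donsker--Varadhan variational representation, or the log-sum inequality, to get $\mathrm{D}_{\mathrm{KL}}(P\|Q)\ge\mathrm{D}_{\mathrm{KL}}(P|_{\mathcal G}\|Q|_{\mathcal G})$), and then reduce a randomized $f$ to the deterministic case by writing $f(h)=g(h,U)$ for a measurable $g$ and auxiliary randomness $U$ independent of everything, noting that adjoining the independent coordinate $U$ changes neither $\mathrm{D}_{\mathrm{KL}}(\cA(Z)\|\cA(Z'))$ nor $\mathrm{D}_{\mathrm{KL}}(f(\cA(Z))\|f(\cA(Z')))$.
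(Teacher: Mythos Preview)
Your argument is correct: this is exactly the data-processing inequality for KL-divergence, and your chain-rule/coupling proof (together with the measure-theoretic caveats you flag and the alternative reduction via independent auxiliary randomness) is a standard and rigorous way to establish it.

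The paper, however, does not reprove the inequality at all: its entire proof is a one-line appeal to the data-processing inequality for R{\'e}nyi divergences (Theorem~1 of van~Erven and Harremo{\"e}s), of which KL is the case $\alpha=1$. So the difference is simply that the paper outsources the result to a citation, whereas you give a self-contained proof. Your approach buys independence from the R{\'e}nyi-divergence machinery and makes the argument readable without chasing references; the paper's approach is shorter and simultaneously covers the whole R{\'e}nyi family, which would matter if one wanted analogous post-processing statements for other divergence-based privacy notions.
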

\begin{proof}
	This directly follows from the data processing inequality for the R{\'e}nyi divergence in \citet[Theorem 1]{van2014renyi}.
\end{proof}


\begin{lem}[Small group privacy]\label{lem:smallgroup_privacy}
	Let $k\leq n$. 
	Assume $\cA$ is posterior sampling as in Theorem~\ref{thm:characterization}. Then for any $k=1,...,n$, we have
	{\small
		$$\E_{[Z,z_{1:k}]\sim \cD^{n+k}}D_\mathrm{KL}\left(\cA(Z)\|\cA([Z_{-1:k},z_{1:k}])\right)  = k \E_{[Z,z]\sim \cD^{n+1}}D_\mathrm{KL}\left(\cA(Z)\|\cA([Z_{-1},z])\right).$$
	}
\end{lem}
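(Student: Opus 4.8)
The plan is to expand the KL-divergence between the two Gibbs distributions explicitly, observe that the normalization constants contribute nothing in expectation, and then use exchangeability to see that the $k$-coordinate swap splits into $k$ identical copies of the single-coordinate swap — exactly the computation underlying Theorem~\ref{thm:characterization}.

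First I would write, for $Z' = [Z_{-1:k}, z_{1:k}]$ with $z_1',\dots,z_k'$ the new samples and $C(\cdot)$ the (finite, by hypothesis) normalization constant,
$$
\log\frac{p(h\mid Z)}{p(h\mid Z')} \;=\; \sum_{j=1}^k \bigl(\ell(z_j',h)-\ell(z_j,h)\bigr) \;+\; \log\frac{C(Z')}{C(Z)},
$$
since the prior term $r(h)$ and the $n-k$ unchanged loss terms cancel. Taking $\E_{h\sim\cA(Z)}$ turns the left-hand side into $D_{\mathrm{KL}}(\cA(Z)\|\cA(Z'))$; taking in addition $\E_{Z\sim\cD^n,\, z_{1:k}\sim\cD^k}$, the crucial point (the ``ghost sample'' observation from the proof of Theorem~\ref{thm:characterization}) is that $[Z_{-1:k},z_{1:k}]$ is itself a vector of $n$ i.i.d.\ draws from $\cD$, so $\E\log C(Z') = \E\log C(Z)$ and the last term disappears.

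It then remains to handle $\sum_{j=1}^k \E_{Z,z_{1:k}}\E_{h\sim\cA(Z)}\bigl[\ell(z_j',h)-\ell(z_j,h)\bigr]$. For each $j$, the sample $z_j'$ is drawn from $\cD$ independently of $Z$, so $\E\E_{h\sim\cA(Z)}\ell(z_j',h) = \E R(\cA(Z))$; and because $Z\sim\cD^n$ is exchangeable, $\E\E_{h\sim\cA(Z)}\ell(z_j,h)$ does not depend on $j$ and equals $\E\hat R(\cA(Z),Z)$. Hence each of the $k$ summands equals the single quantity $\E_{Z,z}\E_{h\sim\cA(Z)}\bigl[\ell(z,h)-\ell(z_1,h)\bigr]$, which — by the $k=1$ instance of the very same expansion and cancellation — is precisely $\E_{[Z,z]\sim\cD^{n+1}}D_{\mathrm{KL}}(\cA(Z)\|\cA([Z_{-1},z]))$. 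Summing over $j$ produces the factor $k$ and finishes the proof.

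The only genuine obstacle is integrability bookkeeping: splitting the expectation and cancelling the normalization term require $\E|\log C(Z)|<\infty$ and $\E\E_{h\sim\cA(Z)}|\ell(z_j,h)|<\infty$, which one should note hold under the finiteness assumptions in place. If one prefers never to isolate $\E\log C$, the identical argument can be run on the telescoping decomposition $\log\frac{p(h\mid Z^{(0)})}{p(h\mid Z^{(k)})} = \sum_{j=1}^k \log\frac{p(h\mid Z^{(j-1)})}{p(h\mid Z^{(j)})}$ through the hybrid data sets $Z^{(j)} = [Z_{-1:j}, z_{1:j}]$, each successive increment changing a single coordinate so that its normalization contribution has zero expectation.
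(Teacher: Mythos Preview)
Your argument is correct and rests on the same two ingredients as the paper's proof: the explicit Gibbs form of the log-ratio and the fact that, under $\cD^{n+k}$, the expected log-partition functions coincide so the normalization contribution vanishes. The organization differs slightly. The paper proceeds by the telescoping decomposition you mention at the end---writing the $k$-swap log-ratio as a sum over hybrids $Z^{(j)}=[Z_{-1:j},z_{1:j}]$---and then has to confront the ``technical issue'' that in each increment the expectation is taken over $h\sim\cA(Z)$ rather than $h\sim\cA(Z^{(j-1)})$; it resolves this by rewriting the increment in terms of losses, swapping the normalization constants (which have equal expectation), and reassembling. Your primary route is more direct: you expand the full $k$-swap log-ratio at once, cancel the normalization in expectation, and use exchangeability to identify each of the $k$ summands with the on-average generalization error, which by the $k=1$ case is the single-swap KL. This avoids ever introducing the intermediate hybrids with the ``wrong'' sampling distribution, and makes the link to Theorem~\ref{thm:characterization} more transparent. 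Both routes ultimately perform the same cancellation; yours just does it in one shot.
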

\begin{lem}[Adaptive Composition Theorem]\label{lem:composition}
	Let $\cA$ be $\epsilon_1$-(On-Average) KL-Privacy and $\cB(\cdot,h)$ be $\epsilon_2$-(On-Average) KL-Privacy for every $h\in \Omega_\cA$ where the support of random function $\cA$ is $\Omega_\cA$. Then $(\cA,\cB)$ jointly is ($\epsilon_1+\epsilon_2$)-(On-Average) KL-Privacy.
\end{lem}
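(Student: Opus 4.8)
The plan is to derive both statements (pure KL-privacy and On-Average KL-privacy, treated in parallel) from the chain rule for Kullback--Leibler divergence. Write $\cC=(\cA,\cB)$ for the composed mechanism: on a data set $Z$ it first draws $h\sim\cA(Z)$ and then, conditionally on $h$, an output from $\cB(Z,h)$, so $\cC(Z)$ is a joint law on $\cH\times\cH'$. For any $Z,Z'$, the chain rule applied to this pair of joint laws, decomposed over the first coordinate, gives
\[
D_{\mathrm{KL}}\big(\cC(Z)\,\big\|\,\cC(Z')\big)\;=\;D_{\mathrm{KL}}\big(\cA(Z)\,\big\|\,\cA(Z')\big)\;+\;\E_{h\sim\cA(Z)}\,D_{\mathrm{KL}}\big(\cB(Z,h)\,\big\|\,\cB(Z',h)\big),
\]
which is finite under the hypotheses. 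This identity is the skeleton of the proof; the two versions of the lemma differ only in how the two right-hand terms are bounded.

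For pure KL-privacy, fix adjacent $Z,Z'$ with $d(Z,Z')=1$. The first term is $\le\epsilon_1$ by the KL-privacy of $\cA$. In the second term every realized $h$ belongs to $\Omega_\cA$, and $\cB(\cdot,h)$ is $\epsilon_2$-KL-private, so the inner divergence is $\le\epsilon_2$ for each such $h$; hence its expectation over $h\sim\cA(Z)$ is also $\le\epsilon_2$. Adding these and taking the supremum over adjacent $Z,Z'$ gives $(\epsilon_1+\epsilon_2)$-KL-privacy of $\cC$.

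For On-Average KL-privacy, put $Z'=[Z_{-1},z]$ and take the expectation of the identity over $Z\sim\cD^n$ and the independent fresh point $z\sim\cD$. The first term averages to $\E_{Z,z}D_{\mathrm{KL}}(\cA(Z)\|\cA([Z_{-1},z]))\le\epsilon_1$, straight from the On-Average KL-privacy of $\cA$. The second term is $\E_{Z\sim\cD^n}\E_{z\sim\cD}\E_{h\sim\cA(Z)}D_{\mathrm{KL}}(\cB(Z,h)\|\cB([Z_{-1},z],h))$, and the goal is to bound this by $\epsilon_2$ using that $\cB(\cdot,h)$ is $\epsilon_2$-On-Average KL-private for every $h\in\Omega_\cA$. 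I expect this to be the main obstacle: the output $h$ of $\cA$ is statistically coupled to $Z$ (the conditional law of $Z$ given $h$ is the posterior induced by $\cA$, not $\cD^n$), so one cannot simply pull the $h$-expectation outside and invoke $\cB$'s guarantee at a data-independent $h$. My plan is to condition on $h$, use that $z$ is drawn independently of $(Z,h)$, and argue that because the bound $\epsilon_2$ holds for \emph{every} $h$ --- not merely on average over $h$ --- it survives this conditioning; pinning down this step cleanly (and checking whether it requires any structure of $\cA$ beyond what is assumed, e.g.\ the posterior-sampling form of Theorem~\ref{thm:characterization}) is the delicate part. Granting it, the two terms are bounded by $\epsilon_1$ and $\epsilon_2$, and summing gives $(\epsilon_1+\epsilon_2)$-On-Average KL-privacy. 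The remaining work --- verifying the chain-rule identity and the Fubini-type interchanges of the nested expectations --- is routine.
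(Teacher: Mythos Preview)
Your approach is exactly the paper's: it expands the KL divergence of the joint mechanism via the chain rule, bounds the $\cA$-term directly, and for the $\cB$-term replaces $\E_{h_1\sim\cA(Z)}$ by $\sup_{h_1\in\Omega_\cA}$ before either taking $\sup_{Z,Z'}$ (KL case) or $\E_{Z\sim\cD^n,\,z\sim\cD}$ (On-Average case).

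On the coupling issue you flag: the paper does not supply any further argument. After the $\sup_{h_1}$ step it simply says ``take $\E$ over $Z$ and $z$'' and declares the result, so the quantity it implicitly bounds by $\epsilon_2$ is $\E_{Z,z}\sup_{h_1}D_{\mathrm{KL}}\big(\cB(Z,h_1)\|\cB([Z_{-1},z],h_1)\big)$, whereas the stated hypothesis ``$\cB(\cdot,h)$ is $\epsilon_2$-On-Average KL-private for every $h$'' only gives $\sup_{h_1}\E_{Z,z}[\cdots]\le\epsilon_2$. Your suspicion that this step is delicate is therefore well placed rather than a sign you are missing an idea the paper provides; the paper's proof glosses over exactly the point you identify, and no posterior-sampling structure of $\cA$ is invoked there.
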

We prove Lemma~\ref{lem:smallgroup_privacy}~and~\ref{lem:composition} in the appendix.

\vspace{-1em}
\subsection{Posterior Sampling as Max-Entropy solutions}\label{sec:maxent}
\vspace{-0.5em}
In this section, we give a few theoretical justifications why restricting to posterior sampling is not limiting the applicability of Theorem~\ref{thm:characterization} much. First of all, Laplace, Gaussian and Exponential Mechanism in the Differential Privacy literature are special cases of this class. Secondly, among all distributions to sample from, the Gibbs distribution is the variational solution that simultaneously maximizes the conditional entropy and utility. To be more precise on the claim, we first define conditional entropy.
\begin{defi}[Conditional Entropy]
	Conditional entropy
	$$H(\cA(Z)|Z)  = -\E_{Z}\E_{h\sim \cA(Z)} \log p(h|Z)$$ 
	where  $\cA(Z) \sim p(h|Z)$.
\end{defi}
\begin{thm}
	Let $Z\sim \cD^n$ for any distribution $\cD$. A variational solution to the following convex optimization problem
	\begin{equation}
	\begin{aligned}
	\min_{\cA} &\quad -\frac{1}{\gamma} \E_{Z\sim \cD^n}H(\cA(Z)|Z)+ \E_{Z\sim \cD^n} \E_{h\sim \cA(Z)} \sum_{i=1}^n\ell_i(h,z_i)
	\end{aligned}
	\end{equation}
	is $\cA$ that outputs $h$ with distribution $
	p(h|Z) \propto \exp\left( -\gamma\sum_{i=1}^n\ell_i(h,z_i)\right).
	$
\end{thm}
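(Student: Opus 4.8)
The statement is an instance of the Gibbs variational principle (minimum free energy), so the plan is to reduce the optimization over randomized algorithms $\cA$ to a family of pointwise-in-$Z$ optimizations over probability densities on $\cH$, and then to recognize each inner problem as the minimization of a Kullback--Leibler divergence, whose minimizer is manifestly the Gibbs density.

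The first step is to observe that the objective decomposes over $Z$. Writing $q_Z(\cdot)$ for the density of $\cA(Z)$ and expanding the conditional entropy via its definition, the objective becomes
\begin{equation*}
F(\cA)=\E_{Z\sim\cD^n}\Big[\tfrac{1}{\gamma}\!\int_{\cH} q_Z(h)\log q_Z(h)\,dh+\int_{\cH} q_Z(h)\sum_{i=1}^n\ell_i(h,z_i)\,dh\Big]=:\E_{Z\sim\cD^n}\big[J_Z(q_Z)\big].
\end{equation*}
Since any measurable map $Z\mapsto q_Z$ (with each $q_Z$ a density) corresponds to a legitimate randomized algorithm, and the $q_Z$ for distinct $Z$ enter separate summands, minimizing $F$ is the same as minimizing $J_Z(q_Z)$ for $\cD^n$-almost every $Z$ individually.

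Next I would fix $Z$ and rewrite $J_Z$. Set $p^*_Z(h)\propto\exp(-\gamma\sum_{i=1}^n\ell_i(h,z_i))$ with normalizing constant $W(Z)=\int_\cH\exp(-\gamma\sum_{i=1}^n\ell_i(h,z_i))\,dh$, which we assume finite (analogous to the standing assumption of Theorem~\ref{thm:characterization}). Then $\sum_{i=1}^n\ell_i(h,z_i)=-\tfrac1\gamma\log p^*_Z(h)-\tfrac1\gamma\log W(Z)$, and substituting into $J_Z$ and using $\int q_Z=1$ gives
\begin{equation*}
J_Z(q_Z)=\tfrac1\gamma\int_\cH q_Z(h)\log\frac{q_Z(h)}{p^*_Z(h)}\,dh-\tfrac1\gamma\log W(Z)=\tfrac1\gamma\,D_{\mathrm{KL}}\!\big(q_Z\,\big\|\,p^*_Z\big)-\tfrac1\gamma\log W(Z).
\end{equation*}
The second term is independent of $q_Z$, and $D_{\mathrm{KL}}(q_Z\|p^*_Z)\ge 0$ with equality if and only if $q_Z=p^*_Z$ almost everywhere; hence $J_Z$ is uniquely minimized at $q_Z=p^*_Z$. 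Carrying this out for every $Z$ shows the minimizer of $F$ is $\cA(Z)\sim p(h\mid Z)\propto\exp(-\gamma\sum_{i=1}^n\ell_i(h,z_i))$. This identity also exhibits the claimed convexity, since $q\mapsto D_{\mathrm{KL}}(q\|p^*_Z)$ is convex and expectation over $Z$ preserves convexity, and it makes ``variational'' precise: adjoining a multiplier $\lambda_Z(\int q_Z-1)$ and setting the first variation of $J_Z$ with respect to $q_Z(h)$ to zero yields $\tfrac1\gamma(\log q_Z(h)+1)+\sum_{i=1}^n\ell_i(h,z_i)+\lambda_Z=0$, i.e. $q_Z(h)\propto\exp(-\gamma\sum_{i=1}^n\ell_i(h,z_i))$, with $\lambda_Z$ fixed by normalization.

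The only real difficulty is measure-theoretic bookkeeping rather than anything conceptual: one must check that the entropy term and the cross-entropy $\int q_Z\log p^*_Z$ are well-defined and finite so that the algebra producing the KL divergence is valid, accommodate a possibly improper base measure or prior $\pi$, and justify that a pointwise-optimal selection $Z\mapsto q_Z$ can be made measurably. Restricting to data sets $Z$ with $W(Z)<\infty$ and to candidate densities $q_Z$ of finite entropy — harmless under the hypotheses — suffices to make every line above rigorous.
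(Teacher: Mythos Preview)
Your argument is correct and in fact more complete than the paper's own treatment. The paper does not give a self-contained proof: it simply observes that the statement is an instance of a result in \citet{mir2013information} (originating in \citet{tishby2000information}) with the distortion function taken to be the empirical risk, and remarks that one can verify the claim by substituting the candidate density into the first-order optimality condition with a Lagrange multiplier enforcing normalization. Your primary route---decomposing over $Z$ and rewriting the per-$Z$ objective as $\tfrac{1}{\gamma}D_{\mathrm{KL}}(q_Z\|p^*_Z)$ plus a $q_Z$-independent free-energy term---is the standard Gibbs variational principle and yields both existence and uniqueness of the minimizer directly from nonnegativity of KL, without appealing to stationarity conditions. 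You then also sketch the Lagrange-multiplier calculation, which is exactly what the paper alludes to. So the two are consistent; your KL identity is simply a cleaner packaging of the same computation, and it has the minor advantage of certifying global optimality (and convexity) immediately rather than only stationarity.
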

\begin{proof}
	This is an instance of Theorem~3 in \citet{mir2013information} (first appeared in \citet{tishby2000information}) by taking the distortion function to be the empirical risk. Note that this is a simple convex optimization over the functions and the proof involves substituting the solution into the optimality condition with a specific Lagrange multiplier chosen to appropriately adjust the normalization constant. \qed
\end{proof}
The above theorem is distribution-free, and in fact works for every instance of $Z$ separately. Condition on each $Z$, the variational optimization finds the distribution with maximum information entropy among all distributions that satisfies a set of utility constraints. This corresponds to the well-known principle of maximum entropy (MaxEnt)~\citep{jaynes1957information}. Many philosophical justifications of this principle has been proposed, but we would like to focus on the statistical perspective and treat it as a form of regularization that penalizes the complexity of the chosen distribution (akin to Akaike Information Criterion~\citep{akaike1981likelihood}), hence avoiding overfitting to the data.
For more information, we refer the readers to references on  MaxEnt's connections to thermal dynamics \citep{jaynes1957information}, to Bayesian inference and convex duality \citep{altun2006unifying} as well as its modern use in modeling natural languages \citep{berger1996maximum}.


Note that the above characterization also allows for any form of prior $\pi(h)$ to be assumed. Denote prior entropy $\tilde{H}(h) = -\E_{h\sim\pi(h)}\log(\pi(h))$, and define information gain $\tilde{I}(h;Z) = \tilde{H}(h)- H(h|Z)$. The variational solution of $p(h|Z)$ that minimizes $\tilde{I}(h;Z) + \gamma \E_Z\E_{h|Z} \cL(Z,h)$ is proportional to $\exp(-\cL(Z,h))\pi(h)$. This provides an alternative way of seeing the class of algorithms $\cA$ that we consider. $\tilde{I}(h;Z)$ can be thought of as a information-theoretic quantification of privacy loss, as described in \citep{zhou2009compressed,mir2013information}. As a result, we can think of the class of $\cA$ that samples from MaxEnt distributions as the most private algorithm among all algorithms that achieves a given utility constraint.



\vspace{-.5cm}
\section{Max-Information and Mutual Information} 
\vspace{-.5cm}
Recently, \citet{dwork2015generalization} defined approximate max-information 
 and used it as a tool to prove generalization (with high probability). \citet{russo2015controlling} showed that the weaker mutual information 
  implies on-average generalization under a distribution assumption of the entire space of $\{\cL(h,Z)|h\in\cH\}$ induced by distribution of $Z$. In this section, we compare On-Average KL-Privacy with these two notions. Note that we will use $Z$ and $Z'$ to denote two completely different datasets rather than adjacent ones as we had in differential privacy.
\begin{defi}[Max-Information, Definition 11 in \citep{dwork2015generalization}]
	We say $\cA$ has an $\beta$-approximate max-information of $k$ if for every distribution $\cD$, 
	$$I^\beta_\infty(Z;\cA(Z)) = \max_{(H,Z)\subset\cH\times\cZ^n: \P(h\in H,Z\in \tilde{Z}) >\beta} \log\frac{\P(h\in H,Z\in \tilde{Z})-\beta}{\P(h\in H)p(Z\in\tilde{Z})} \leq k.$$
	 This is alternatively denoted by $I_\infty^\beta(\cA,n)\leq k$.  We say $\cA$ has a pure max-information of $k$ if $\beta=0$.
\end{defi}
It is shown that differential privacy and short description length imply bounds on max-information \citep{dwork2015generalization}, hence generalization. 
Here we show that the pure max-information implies a very strong On-Average-KL-Privacy for any distribution $\cD$ when we take $\cA$ to be a posterior sampling mechanism. 
\begin{lem}[Relationship to max-information]\label{lem:maxinfo}
	If $\cA$ is a posterior sampling mechanism as described in Theorem~\ref{thm:characterization}, then
	$I_\infty(\cA,n) \leq k$ implies that $\cA$ obeys $k/n$-On-Average-KL-Privacy for any data generating distribution $\cD$.
\end{lem}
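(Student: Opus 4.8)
The plan is to reduce the statement to a bound about two \emph{independent} datasets, translate the pure max-information hypothesis into a likelihood-ratio bound, and then carry out a short expectation estimate; the last step is the only nontrivial one. For the reduction, apply Lemma~\ref{lem:smallgroup_privacy} with $k=n$: replacing all $n$ coordinates of $Z$ by a fresh block yields an independent copy $Z'\sim\cD^n$, so the On-Average KL-Privacy equals $\tfrac1n\E\,D_\mathrm{KL}(\cA(Z)\|\cA(Z'))$ with $Z,Z'$ i.i.d.\ from $\cD^n$, and it suffices to show $\E\,D_\mathrm{KL}(\cA(Z)\|\cA(Z'))\le k$. (Equivalently one may invoke Theorem~\ref{thm:characterization} and bound the generalization gap $\E R(\cA(Z))-\E\hat R(\cA(Z),Z)$ by $k/n$; after the ghost-sample identity the two formulations coincide.)

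Next, unpack what pure max-information buys. Taking densities of $Z$ with respect to $\cD^n$, the Radon--Nikodym derivative of the joint law of $(\cA(Z),Z)$ against the product of its marginals is $p(h\mid Z)/p(h)$, where $p(h)=\E_{Z\sim\cD^n}p(h\mid Z)$ is the marginal of the output. Hence $I_\infty(\cA,n)\le k$ is exactly the pointwise bound $p(h\mid Z)\le e^{k}p(h)$ a.e.; and since the definition demands this for \emph{every} $\cD$, one extracts in addition a matching lower control $p(h\mid Z)\gtrsim e^{-k}p(h)$, which is what the backward half of the estimate needs. We also record the identity $\E_{Z'\sim\cD^n}p(h\mid Z')=p(h)$ --- the ``expected normalization constants agree'' fact already used for Theorem~\ref{thm:characterization}.

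For the estimate, write $\rho(h,Z)=p(h\mid Z)/p(h)$, so that $\E_{Z\sim\cD^n}\rho(h,Z)=1$ for every $h$ and $\E_{h\sim P_h}\rho(h,Z)=1$ for every $Z$. Expanding $D_\mathrm{KL}(\cA(Z)\|\cA(Z'))=\E_{h\sim\cA(Z)}[\log p(h\mid Z)-\log p(h\mid Z')]$, using that $Z'\perp(Z,h)$ with $\E_{Z'}p(h\mid Z')=p(h)$, and changing measure via $\E_{h\sim\cA(Z)}[g]=\E_{h\sim P_h}[\rho\,g]$, the average over $Z,Z'$ collapses to $\E_{(h,Z)\sim P_h\otimes P_Z}\!\big[(\rho(h,Z)-1)\log\rho(h,Z)\big]$. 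The contribution $\E[\rho\log\rho]$ is precisely $I(Z;\cA(Z))$, which is $\le k$ since mutual information is dominated by max-information; the remaining contribution $-\E[\log\rho]=D_\mathrm{KL}(P_h\otimes P_Z\|P_{h,Z})$ is bounded using the two-sided control on $\rho$ from the previous step together with the normalization $\E\rho=1$ (a short convexity/AM--GM estimate for $t\mapsto(t-1)\log t$, where one must also use that $\rho$ is ``doubly stochastic'' to reach the stated constant).

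The main obstacle is exactly the estimate on $-\E[\log\rho]$: all the obvious manipulations --- Jensen's inequality, nonnegativity of KL --- go the wrong way and only reproduce the \emph{lower} bound $\E\,D_\mathrm{KL}(\cA(Z)\|\cA(Z'))\ge I(Z;\cA(Z))$ (which is itself the mutual-information lower bound on generalization advertised in the abstract). The real work is to exploit that max-information holds for \emph{all} data-generating distributions in order to control the likelihood ratio $\rho$ from below, and then to make the resulting bound collapse to the clean constant $k$ rather than a lossy multiple of it; I expect getting that constant to be the delicate point.
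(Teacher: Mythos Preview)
Your Step~1 is right and is exactly what the paper does: invoking Lemma~\ref{lem:smallgroup_privacy} with $k=n$ (equivalently, the telescoping plus ``expected normalization constants agree'' manipulation) gives
\[
\E_{Z,Z'\sim\cD^n}D_{\mathrm{KL}}\big(\cA(Z)\,\|\,\cA(Z')\big)
\;=\; n\cdot \E_{Z\sim\cD^n,z\sim\cD}D_{\mathrm{KL}}\big(\cA(Z)\,\|\,\cA([Z_{-1},z])\big),
\]
so it suffices to bound the left side by $k$.

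The gap is in Step~2. You unpack max-information as a one-sided bound on $\rho(h,Z)=p(h\mid Z)/p(h)$ and then try to control $\E_{P_h\otimes P_Z}[(\rho-1)\log\rho]$. Two problems:
\begin{itemize}
\item The claimed ``matching lower control'' $\rho\gtrsim e^{-k}$ is not obtained just by varying $\cD$ in the max-information definition; changing $\cD$ changes the marginal $p(h)$ as well as the support of $Z$, so you cannot freeze $(h,Z)$ and move $\cD$ to squeeze out a pointwise lower bound on $\rho$. The only clean route to a two-sided control is the characterization the paper actually uses (below).
\item Even if you grant $\rho\in[e^{-k},e^k]$ together with $\E\rho=1$, the convexity/extreme-point bound on $(\rho-1)\log\rho$ gives at best $2k\tanh(k/2)<2k$, not $k$; the ``doubly stochastic'' observation does not recover the lost factor. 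So your route, as written, cannot deliver the stated constant $k/n$.
\end{itemize}

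The paper sidesteps all of this by quoting Lemma~12 of Dwork et~al., which gives the alternative characterization
\[
I_\infty(\cA,n)\;=\;\sup_{Z,Z'\in\cZ^n} D_\infty\big(\cA(Z)\,\|\,\cA(Z')\big).
\]
From this, for \emph{every} pair $Z,Z'$ one has the trivial pointwise inequality
\[
D_{\mathrm{KL}}\big(\cA(Z)\,\|\,\cA(Z')\big)
\;=\;\E_{h\sim\cA(Z)}\log\frac{p_{\cA(Z)}(h)}{p_{\cA(Z')}(h)}
\;\le\;\sup_h\log\frac{p_{\cA(Z)}(h)}{p_{\cA(Z')}(h)}
\;=\;D_\infty\big(\cA(Z)\,\|\,\cA(Z')\big)\;\le\;k,
\]
and taking $\E_{Z,Z'\sim\cD^n}$ finishes Step~2 with the exact constant. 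In short: keep your Step~1, but replace your $\rho$-analysis by the one-line observation $D_{\mathrm{KL}}\le D_\infty$ together with the $D_\infty$ characterization of pure max-information.
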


An immediate corollary of the above connection is that we can now significantly simplify the proof for ``max-information $\Rightarrow$ generalization'' for posterior sampling algorithms.
\begin{cor}
	Let $\cA$ be a posterior sampling algorithm. $I_\infty(\cA,n)\leq k$ implies that $\cA$ generalizes with rate $k/n$. 
\end{cor}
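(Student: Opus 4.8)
The plan is to obtain this corollary by composing the two results already in hand, namely Lemma~\ref{lem:maxinfo} and Theorem~\ref{thm:characterization}. Since the hypothesis is that $\cA$ is a posterior sampling mechanism of exactly the form treated in Theorem~\ref{thm:characterization} (so that $\cA(Z)$ samples $h\sim p(h|Z)\propto\exp(-\sum_{i=1}^n\ell(z_i,h)-r(h))$ with $\ell(z,h)=-\log p(z|h)$ and a finite normalizer for every $Z$), the first step is to feed the assumption $I_\infty(\cA,n)\le k$ into Lemma~\ref{lem:maxinfo}, which outputs the intermediate conclusion that $\cA$ satisfies $(k/n)$-On-Average-KL-Privacy for every data-generating distribution $\cD$.

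The second step is to apply the forward (``On-Average KL-Privacy $\Rightarrow$ Generalization'') direction of Theorem~\ref{thm:characterization} to this intermediate conclusion: $(k/n)$-On-Average-KL-Privacy gives $\big|\E R(\cA(Z))-\E\hat R(\cA(Z),Z)\big|\le k/n$. Because the privacy bound from Lemma~\ref{lem:maxinfo} held for an arbitrary $\cD$, the generalization bound inherits the same ``for every $\cD$'' quantifier, which is precisely the assertion that $\cA$ generalizes with rate $k/n$.

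I do not expect any real obstacle here; the work is all in verifying that the hypotheses line up. The two facts one should double-check are: (i) both Lemma~\ref{lem:maxinfo} and Theorem~\ref{thm:characterization} are stated for the identical class of posterior sampling algorithms, so no extra regularity assumption is needed; and (ii) Lemma~\ref{lem:maxinfo} delivers an \emph{upper} bound $k/n$ on the On-Average-KL-Privacy parameter, and we use the trivial monotonicity that an algorithm satisfying $\epsilon$-On-Average-KL-Privacy also satisfies $\epsilon'$-On-Average-KL-Privacy for any $\epsilon'\ge\epsilon$, so the equivalence in Theorem~\ref{thm:characterization} transfers the bound $k/n$ to the generalization error. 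It is also worth a one-line remark that the expectations defining the privacy quantity and the generalization quantity are over the same randomness (the draw $Z\sim\cD^n$ together with a fresh $z\sim\cD$ used in the replace-one construction), so there is no averaging mismatch to worry about.
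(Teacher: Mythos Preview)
Your proposal is correct and matches the paper's intended argument exactly: the corollary is stated as an immediate consequence of composing Lemma~\ref{lem:maxinfo} with the forward direction of Theorem~\ref{thm:characterization}, just as you do. The two checks you flag (same class of posterior-sampling algorithms, and monotonicity in the privacy parameter) are the only things to verify, and both hold trivially.
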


We now compare to mutual information and draw connections to \citep{russo2015controlling}.

\begin{defi}[Mutual Information] 
	The mutual information 
	$$I(\cA(Z);Z)  = \E_{Z}\E_{h\sim \cA(Z)} \log \frac{p(h,Z)}{p(h)p(Z)}$$ 
	where  $\cA(Z) \sim p(h|Z)$, $p(Z) = \cD^n$ and $p(h)=\int p(h|Z)p(Z)dZ$.
\end{defi}

\begin{lem}[Relationship to Mutual Information]\label{lem:mutualinfo}
	For any randomized algorithm $\cA$, let $\cA(Z)$ be an RV, and $Z, Z'$ be two datasets of size $n$. We have
	{\small
	$$I(\cA(Z);Z)  = D_{\mathrm{KL}}(\cA(Z)\|\cA(Z')) + \E_{Z}\E_{h\sim\cA(Z)}\left[\E_{Z'}\log p(\cA(Z')) - \log \E_{Z'} p(\cA(Z'))\right],$$
}
	which by Jensen's inequality implies
	$
	I(\cA(Z),Z) \leq \text{D}_{\mathrm{KL}}(\cA(Z)\|\cA(Z')).
	$
\end{lem}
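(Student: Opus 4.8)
The plan is to start from the definition of mutual information and introduce a \emph{ghost sample} $Z'$, an independent copy of $Z\sim\cD^n$. Because $p(h)=\int p(h|Z)p(Z)\,dZ=\E_{Z'}\,p(h|Z')$, we have
$$
I(\cA(Z);Z)=\E_Z\E_{h\sim\cA(Z)}\log\frac{p(h|Z)}{p(h)}
=\E_Z\E_{h\sim\cA(Z)}\Big[\log p(h|Z)-\log\E_{Z'}p(h|Z')\Big].
$$
The first step is a pure add-and-subtract: insert $\E_{Z'}\log p(h|Z')$ inside the bracket and split the integrand as
$$
\Big(\log p(h|Z)-\E_{Z'}\log p(h|Z')\Big)+\Big(\E_{Z'}\log p(h|Z')-\log\E_{Z'}p(h|Z')\Big).
$$

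Next I would identify the two pieces. Pulling $\E_{Z'}$ out of the first piece (valid in the regime made precise below), $\E_Z\E_{h\sim\cA(Z)}\big[\log p(h|Z)-\E_{Z'}\log p(h|Z')\big]=\E_Z\E_{Z'}\E_{h\sim\cA(Z)}\log\frac{p(h|Z)}{p(h|Z')}$, which is precisely $D_{\mathrm{KL}}(\cA(Z)\|\cA(Z'))$ under the natural reading in which $Z,Z'$ are independent draws from $\cD^n$ and that symbol denotes the expectation over the pair. The second piece is left untouched and matches the bracketed term in the statement once $p(\cA(Z'))$ is read as the density $p(\cdot|Z')$ evaluated at $h$. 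This gives the identity. For the inequality, I would invoke Jensen: for each fixed $h$, concavity of $\log$ gives $\E_{Z'}\log p(h|Z')\leq\log\E_{Z'}p(h|Z')$, so the second piece is nonpositive, whence $I(\cA(Z);Z)\leq D_{\mathrm{KL}}(\cA(Z)\|\cA(Z'))$.

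The mathematics here is routine; the only care points are measure-theoretic bookkeeping, and that is where I would spend the effort. The key one is the interchange of $\E_{h\sim\cA(Z)}$ and $\E_{Z'}$ in the first piece: this is legitimate whenever $\E_{Z,Z'}D_{\mathrm{KL}}(\cA(Z)\|\cA(Z'))<\infty$, in which case every quantity in the display is finite and the identity holds exactly; when that average is $+\infty$ the asserted inequality is vacuous, and the identity continues to make sense in $[0,+\infty]$ since a KL divergence is always well-defined there. I would also flag the one genuine modeling point: $p(h)$ must be the $\cA$-output marginal $\int p(h|Z')p(Z')\,dZ'$, and it is exactly this marginalization that makes the ghost-sample substitution an equality rather than an approximation.
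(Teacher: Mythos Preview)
Your proposal is correct and follows essentially the same route as the paper: write $p(h)=\E_{Z'}p(h|Z')$, add and subtract $\E_{Z'}\log p(h|Z')$ inside the integrand, identify the first piece as the averaged KL divergence, and finish with Jensen. The only difference is that you spend a paragraph on the Fubini/finiteness bookkeeping, which the paper omits entirely.
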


A natural observation is that for MaxEnt $\cA$ defined with $\cL$, mutual information lower bounds its generalization error. On the other hand, Proposition 1 in \citet{russo2015controlling} states that under the assumption that $\cL(h,Z)$ is $\sigma^2$-subgaussian for every $h$, then the on-average generalization error is always smaller than
$\sigma\sqrt{2 I(\cA(Z);\cL(\cdot,Z))}.$ Similar results hold for sub-exponential $\cL(h,Z)$ \citep[Proposition~3]{russo2015controlling}.

Note that in their bounds, $I(\cA(Z);\cL(\cdot,Z))$ is the mutual information between the choice of hypothesis $h$ and the loss function for which we are defining generalization on. 
By data processing inequality, we have $I(\cA(Z);\cL(\cdot,Z))  \leq I(\cA(Z);Z)$. Further, when $\cA$ is posterior distribution, it only depends on $Z$ through $\cL(\cdot,Z)$, namely $\cL(\cdot,Z)$ is a sufficient statistic for $\cA$. As a result $\cA \perp Z |\cL(\cdot,Z)$. Therefore, we know  $I(\cA(Z);\cL(\cdot,Z)) = I(\cA(Z);Z)$. Combine this observation with Lemma~\ref{lem:mutualinfo} and Theorem~\ref{thm:characterization}, we get the following characterization of generalization through mutual information.
\begin{cor}[Mutual information and generalization]
	Let $\cA$ be an algorithm that samples $\propto \exp\left(-\gamma \cL(h,Z)\right)$, and $\cL(h,Z)-R(h)$ is $\sigma^2$-subgaussian for any $h\in\cH$, then
$$
\frac{1}{\gamma}I(\cA(Z);Z) \leq \left|\E_Z\E_{h\sim \cA(Z)} [\cL(h,Z) - R(h)] \right| \leq \sigma\sqrt{2 I(\cA(Z);Z)}.
$$
If $\cL(h,Z)-R(h)$ is $\sigma^2$-subexponential with parameter $(\sigma,b)$ instead, then we have a weaker upper bound $b I(\cA(Z);Z) + \sigma^2/(2b)$.
\end{cor}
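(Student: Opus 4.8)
The plan is to prove the two inequalities separately, and neither requires a new idea: the left one combines Theorem~\ref{thm:characterization} with Lemma~\ref{lem:mutualinfo}, while the right one is the bound of \citet{russo2015controlling} fed through the sufficiency observation made in the paragraph just before the corollary. For the left inequality, first note that the On-Average KL-Privacy of $\cA$, $\mathrm{OAKP}(\cA):=\E_{Z,Z'}D_{\mathrm{KL}}(\cA(Z)\,\|\,\cA(Z'))$ with $Z,Z'$ independent copies, is a functional of the output distributions of $\cA$ alone and (in the structural-loss regime, $n=1$) coincides with the quantity appearing on the right of Lemma~\ref{lem:mutualinfo}. Since $\cA$ samples $h\sim p(h\mid Z)\propto\exp(-\gamma\cL(h,Z))$, taking the structural loss to be $\cL':=\gamma\cL$ (with trivial prior) puts $\cA$ in the posterior-sampling form to which Theorem~\ref{thm:characterization} and its Structural-Loss remark apply. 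Because that equivalence holds for every $\epsilon$, it gives the numerical identity $\mathrm{OAKP}(\cA)=\bigl|\E_Z\E_{h\sim\cA(Z)}[\cL'(h,Z)-R'(h)]\bigr|$ with $R'(h)=\E_{Z\sim\cD}\cL'(h,Z)$; since the generalization gap is positively homogeneous in the loss and $\gamma>0$, the right-hand side equals $\gamma\bigl|\E_Z\E_{h\sim\cA(Z)}[\cL(h,Z)-R(h)]\bigr|$. On the other hand, Lemma~\ref{lem:mutualinfo} together with Jensen's inequality gives $I(\cA(Z);Z)\le\mathrm{OAKP}(\cA)$. Chaining these and dividing by $\gamma$ yields $\tfrac{1}{\gamma}I(\cA(Z);Z)\le\bigl|\E_Z\E_{h\sim\cA(Z)}[\cL(h,Z)-R(h)]\bigr|$.

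For the right inequality, when $\cL(h,Z)-R(h)$ is $\sigma^2$-subgaussian for every $h$, Proposition~1 of \citet{russo2015controlling} bounds the on-average generalization error $\bigl|\E_Z\E_{h\sim\cA(Z)}[\cL(h,Z)-R(h)]\bigr|$ by $\sigma\sqrt{2\,I(\cA(Z);\cL(\cdot,Z))}$; in the sub-exponential $(\sigma,b)$ case, Proposition~3 of the same paper gives $b\,I(\cA(Z);\cL(\cdot,Z))+\sigma^2/(2b)$. It then remains to replace $I(\cA(Z);\cL(\cdot,Z))$ by $I(\cA(Z);Z)$, which is precisely the remark preceding the corollary: the data processing inequality gives $I(\cA(Z);\cL(\cdot,Z))\le I(\cA(Z);Z)$, while the posterior $\propto\exp(-\gamma\cL(h,Z))$ depends on $Z$ only through the map $h\mapsto\cL(h,Z)$, so $\cA(Z)\perp Z\mid\cL(\cdot,Z)$ and hence the reverse inequality holds as well. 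Substituting $I(\cA(Z);\cL(\cdot,Z))=I(\cA(Z);Z)$ gives both displayed upper bounds.

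Where the work is: the proof is an assembly of Theorem~\ref{thm:characterization}, Lemma~\ref{lem:mutualinfo}, and \citet{russo2015controlling}, so the only delicate points are bookkeeping. First, $\mathrm{OAKP}(\cA)$ is a KL divergence, hence nonnegative, so before equating it with the two-sided generalization gap one must observe that the MaxEnt mechanism overfits in the matching direction, $\E R(\cA(Z))\ge\E\hat{R}(\cA(Z),Z)$; this is automatic, since the identity behind Theorem~\ref{thm:characterization} equates the gap with a quantity that is $\ge 0$ by construction. Second, $\gamma$ must be propagated consistently so that it surfaces as $1/\gamma$ on the left while $\sigma$ and $b$ on the right pertain to $\cL$ and not to $\gamma\cL$. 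A minor technical point, should one insist on reading Theorem~\ref{thm:characterization} strictly as requiring a negative-log-likelihood loss: rewrite $\exp(-\gamma\cL(h,Z))$ as $c(h)\,p(Z\mid h)$ with $p(\cdot\mid h)$ a density (its normalizer $c(h)$ is finite under the sub-gaussian hypothesis) and absorb $\log c(h)$ into an improper prior $r(h)=-\log c(h)$; this additive $h$-only term cancels in the generalization gap and so does not affect the bound. Finally, for the fully general version with $n>1$ and $\cL$ an empirical risk, the only addition is to invoke Lemma~\ref{lem:smallgroup_privacy} to relate the fully-resampled KL appearing in Lemma~\ref{lem:mutualinfo} to the single-coordinate On-Average KL-Privacy.
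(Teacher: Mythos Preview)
Your proposal is correct and follows the same assembly the paper sketches in the paragraph immediately preceding the corollary: the lower bound chains Lemma~\ref{lem:mutualinfo} (giving $I(\cA(Z);Z)\le \E_{Z,Z'}D_{\mathrm{KL}}(\cA(Z)\|\cA(Z'))$) with Theorem~\ref{thm:characterization} in the structural-loss/dispersion remarks (giving that this KL equals $\gamma$ times the generalization gap), while the upper bound is Russo--Zou's Proposition~1 (resp.\ Proposition~3) together with the sufficiency observation $I(\cA(Z);\cL(\cdot,Z))=I(\cA(Z);Z)$. The paper does not spell out the bookkeeping you include (sign of the gap, $\gamma$-rescaling, absorbing the normalizer into $r(h)$, and the use of Lemma~\ref{lem:smallgroup_privacy} for $n>1$), but your additions are consistent with its argument and fill in exactly the steps it leaves implicit.
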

The corollary implies that for each $\gamma$
we have an intriguing bound that says $I(\cA;Z) \leq 2\gamma^2\sigma^2$ for any distribution of $Z$, $\cH$ and $\cL$ such that $\cL(\cdot,Z)$ is $\sigma^2$-subgaussian. 
One interesting case is when $\gamma=1/\sigma$. This gives
$$\sigma I(\cA(Z);Z) \leq \left|\E_Z\E_{h\sim \cA(Z)} [\cL(h,Z) - R(h)] \right|\leq  \sigma\sqrt{2I(\cA(Z);Z)}.$$
The lower bound is therefore sharp up to a multiplicative factor of $\sqrt{I(\cA(Z);Z)}$. 


\vspace{-.5cm}
\section{Connections to Other Attempts to Weaken DP}
\vspace{-.5cm}
We compare and contrast the On-Average KL-Privacy with other notions of privacy that are designed to weaken the original DP. The (certainly incomplete) list includes $(\epsilon,\delta)$-approximate differential privacy (Approx-DP) \citep{dwork2006our}, random differential privacy (Rand-DP) \citep{hall2011random}, Personalized Differential Privacy (Personal-DP) \citep{ebadi2015personal,liu2015fast} and Total-Variation-Privacy (TV-Privacy) \citep{barber2014privacy,bassily2015algorithmic}. Table~\ref{tab:def_privacy} summarizes and compares of these definitions.

\begin{table}[tb]
	\centering
\resizebox{\textwidth}{!}{
	\begin{tabular}{c|c|c|c}
		\hline
		Privacy definition& $Z$ &  $z$ & Distance (pseudo)metric\\\hline
		Pure DP & $\sup_{Z\in\cZ^n}$ & $\sup_{z\in\cZ}$ & $D_{\infty}(P \| Q)$  \\
		Approx-DP & $\sup_{Z\in\cZ^n}$ & $\sup_{z\in\cZ}$ & $D_{\infty}^{\delta}(P \| Q)$  \\
		Personal-DP & $\sup_{Z\in\cZ^n}$ & for each $z$& $D_{\infty}(P \| Q)$ or $D_{\infty}^{\delta}(P \| Q)$  \\
		KL-Privacy & $\sup_{Z\in\cZ^n}$ & $\sup_{z\in\cZ}$ & $D_{\text{KL}}(P \| Q)$\\
		TV-Privacy & $\sup_{Z\in\cZ^n}$ & $\sup_{z\in\cZ}$ & $\|P-Q\|_{TV}$\\
		Rand-Privacy& $1-\delta_1$ any $\cD^n$& $1-\delta_1$ any $\cD$ & $D_{\infty}^{\delta_2}(P \| Q)$ \\
		On-Avg KL-Privacy& \;$\E_{Z\sim \cD^n}$ for each $\cD$ \;& $\E_{Z\sim \cD}$  for each $\cD$  & $D_{\text{KL}}(P\|Q)$ \\\hline
	\end{tabular}
}
\caption{Summary of different privacy definitions.}\label{tab:def_privacy}
\end{table}
\begin{figure}[tb]
	\centering
	\includegraphics[width=0.9\textwidth]{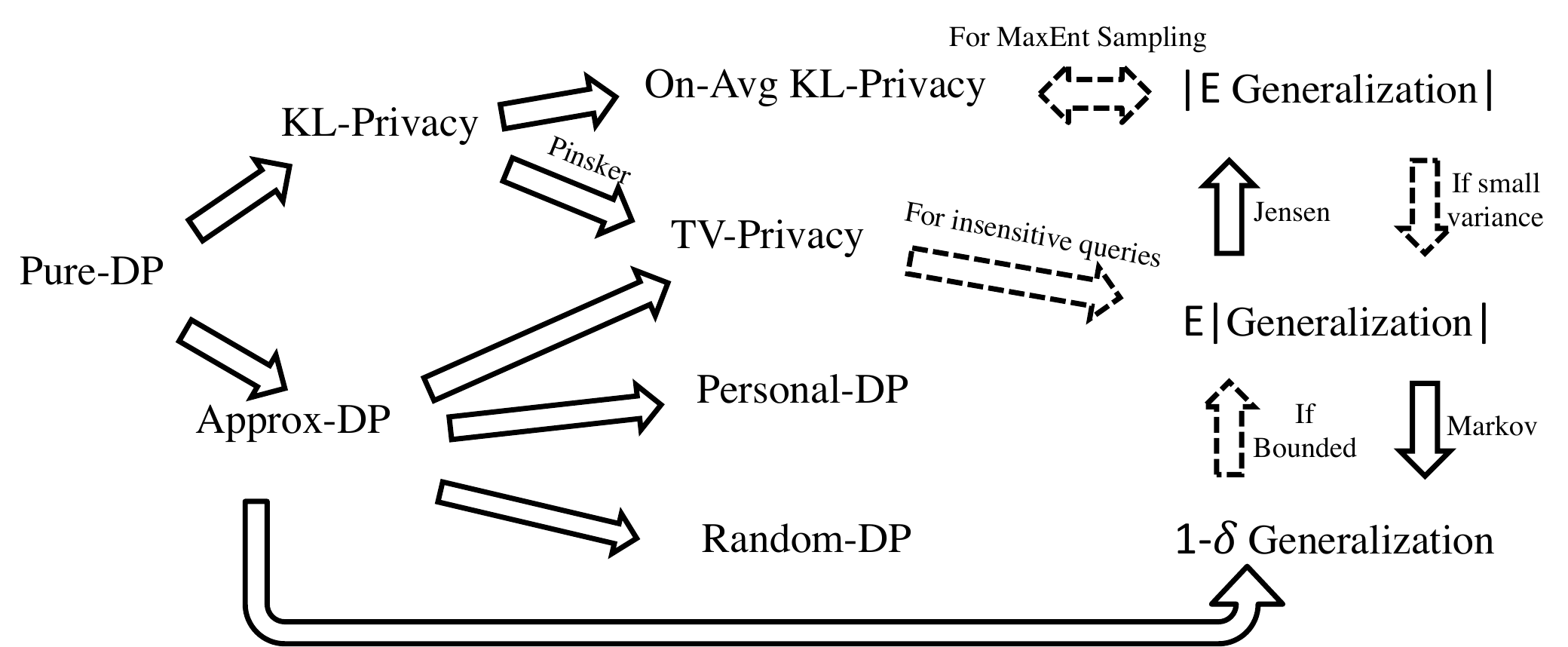}
	\caption{Relationship of different privacy definitions and generalization.}\label{fig:compare}
\end{figure}

A key difference of On-Average KL-Privacy from almost all other previous definitions of privacy, is that the probability is defined only over the random coins of private algorithms. For this reason, even if we convert our bound into the high probability form, the meaning of the small probability $\delta$ would be very different from that in Approx-DP.
The only exception in the list is Rand-DP, which assumes, like we do, the $n+1$ data points in adjacent data sets $Z$ and $Z'$ are draw iid from a distribution. Ours is weaker than Rand-DP in that ours is a distribution-specific quantity.

Among these notions of privacy, Pure-DP and Approx-DP have been shown to imply generalization with high probability \citep{dwork2014preserving,bassily2015algorithmic}; and
TV-privacy was more shown to imply generalization (in expectation) for a restricted class of queries (loss functions) \citep{bassily2015algorithmic}. The relationship between our proposal and these known results are clearly illustrated in Fig.~\ref{fig:compare}. To the best of our knowledge, our result is the first of its kind that crisply characterizes generalization.

Lastly, we would like to point out that while each of these definitions retains some properties of differential privacy, they might not possess all of them simultaneously and satisfactorily. For example, $(\epsilon,\delta)$-approx-DP does not have a satisfactory group privacy guarantee as $\delta$ grows exponentially with the group size.


\vspace{-1em}
\section{Experiments}\label{sec:exp}
\vspace{-1em}
In this section, we validate our theoretical results through numerical simulation. Specifically, we use two simple examples to compare the $\epsilon$ of differential privacy, $\epsilon$ of on-average KL-privacy, the generalization error, as well as the utility, measured in terms of the excess population risk.

\begin{figure}[tb]
	\centering
	\includegraphics[width=0.45\textwidth]{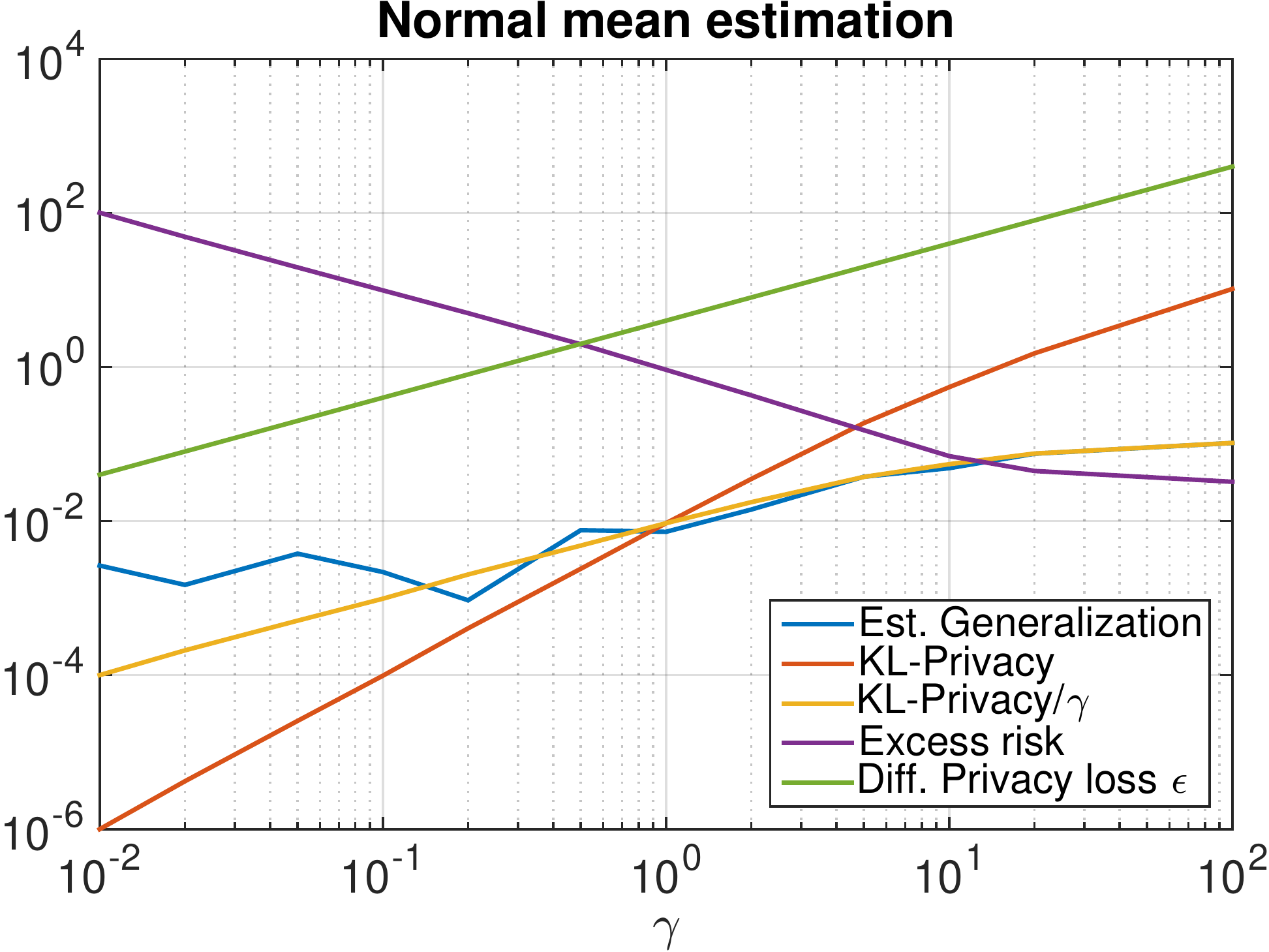}
	\quad
	\includegraphics[width=0.45\textwidth]{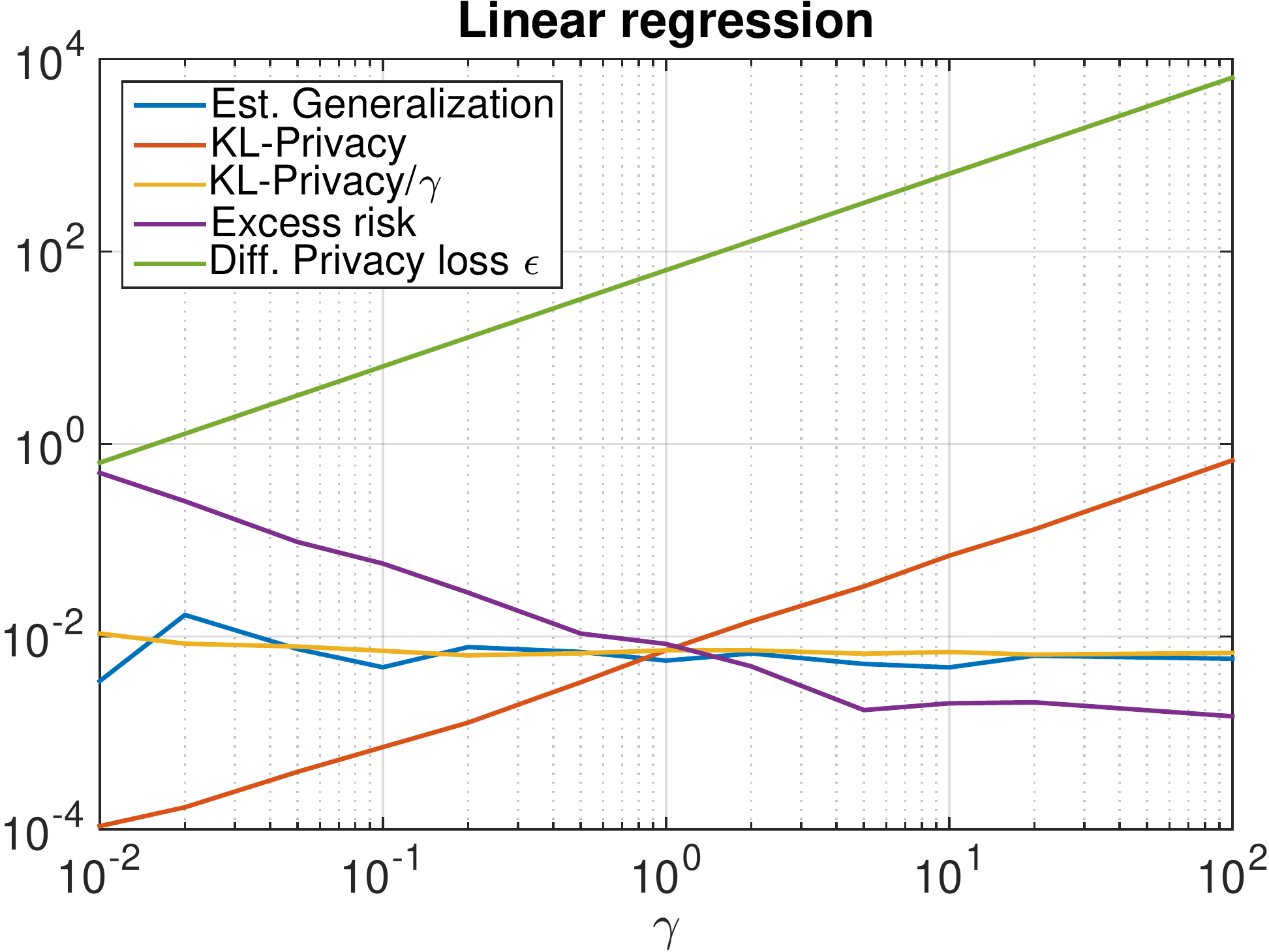}
	\caption{Comparison of On-Avg KL-Privacy and Differential Privacy on two examples.}\label{fig:experiment}
\end{figure}
The first example is the private release of mean, we consider $Z$ to be the mean of $100$ samples from standard normal distribution truncated between $[-2,2]$. Hypothesis space $\cH=\R$, loss function $\cL(Z,h)=|Z-h|$. $\cA$ samples with probability proportional to $\exp(-\gamma|Z-h|)$. Note that this is the simple Laplace mechanism for differential privacy and the global sensitivity is $4$, as a result this algorithm is $4\gamma$-differentially private. 

The second example we consider is a simple linear regression in 1D. We generate the data from a simple univariate linear regression model $y  = xh + \text{noise}$, where $x$ and the noise are both sampled iid from a uniform distribution defined on $[-1,1]$. The true $h$ is chosen to be $1$. Moreover, we use the standard square loss $\ell(z_i,h)= (y_i-x_ih)^2$. Clearly, the data domain $\cZ=\cY\times\cX = [-1,1]\times[-2,2]$ and if we constrain $\cH$ to be within a bounded set $[-2,2]$, $\sup_{x,y,\beta}(y-x\beta)^2 \leq 16$ and the posterior sampling with parameter $\gamma$ obeys $64\gamma$-DP.

Fig.~\ref{fig:experiment} plots the results over an exponential grid of parameter $\gamma$. In these two examples, we calculate on-Average KL-Privacy using known formula of the KL-divergence of Laplace and Gaussian distributions.  Then we stochastically estimate the expectation over data. We estimate the generalization error  in the direct formula by evaluating on fresh samples.
As we can see, appropriately scaled On-Average KL-Privacy characterizes the generalization error precisely as the theory predicts. On the other hand, if we  just compare the privacy losses, the average $\epsilon$ from a random dataset given by On-Avg KL-Privacy is smaller than that for the worst case in DP by orders of magnitudes. 
\vspace{-1em}
\section{Conclusion}
\vspace{-1em}
We  presented On-Average KL-privacy as a new notion of privacy (or stability) on average. We showed that this new definition preserves properties of differential privacy including closedness to post-processing, small group privacy and adaptive composition. Moreover, we showed that On-Average KL-privacy/stability characterizes a weak form of generalization for a large class of sampling distributions that simultaneously maximize entropy and utility. This equivalence and connections to certain information-theoretic quantities allowed us to provide the first lower bound of generalization using mutual information. Lastly, we conduct numerical simulations which confirm our theory and demonstrate the substantially more favorable privacy-utility trade-off.

\newpage
\appendix
\section{Proofs of technical results}
\vspace{-1em}
\begin{proof}[Proof of Theorem~\ref{thm:characterization}]
	We prove this result using a ghost sample trick.
	{\small
		\begin{align*}
		&\E_{z\sim \cD, Z\sim \cD^n}  \E_{h\sim \cA(Z)}\left[ \ell(z,h) - \frac{1}{n}\sum_{i=1}^n \ell(z_i,h)\right]\\
		=&\E_{Z'\sim \cD^n, Z\sim \cD^n}   \E_{h\sim \cA(Z)}\left[ \frac{1}{n}\sum_{i=1}^n\ell(z_i',h) - \frac{1}{n}\sum_{i=1}^n \ell(z_i,h)\right]\\
		=&\frac{1}{n}\sum_{i=1}^n \E_{z_i'\sim \cD, Z\sim \cD^n}   \E_{h\sim \cA(Z)}\left[ \ell(z_i',h) - \ell(z_i,h)\right]\\
		=&\frac{1}{n}\sum_{i=1}^n \E_{z_i'\sim \cD, Z\sim \cD^n}   \E_{h\sim \cA(Z)}\left[ \ell(z_i',h) + \sum_{j\neq i}\ell(z_j,h) +r(h)  - \ell(z_i,h) - \sum_{j\neq i}\ell(z_j,h)  -r(h)\right]\\
		=& \frac{1}{n}\sum_{i=1}^n \E_{z_i'\sim \cD, Z\sim \cD^n}   \E_{\cA(Z)}\left[ - \log p_{\cA([Z_{-i},z'_i])}(h)+\log p_{\cA(Z)(h)}(h)  + \log K_{i}- \log K'_{i}\right]\\
		=& \frac{1}{n}\sum_{i=1}^n \E_{z_i'\sim \cD, Z\sim \cD^n}   \E_{\cA(Z)}\left[ \log p_{\cA(Z)}(h) - \log p_{\cA([Z_{-i},z'_i])}(h) \right]\\
		=&\E_{z\sim \cD, Z\sim \cD^n}   \E_{h\sim \cA(Z)}\left[ \log p_{\cA(Z)}(h) - \log p_{ \cA([Z_{-1},z])}(h)  \right]\,. 
		\end{align*}
	}
	The  $K_i$ and $K_i'$ are partition functions of $p_{\cA(Z)}(h)$ and $p_{\cA([Z_{-i},z_i'])}(h)$ respectively. Since $z_i\sim z_i'$, we know $ \E K_i - \E K_i' = 0.$ The proof is complete by noting that 
	the On-Average KL-privacy is always non-negative and so is the difference of the actual risk and expected empirical risk (therefore we can take absolute value without changing the equivalence).
	\qed
\end{proof}
\begin{proof}[Proof of Lemma~\ref{lem:smallgroup_privacy}]
	Let $k=2$, we have
	\begin{align*}
	&\E_{[Z,z_1',z_2']\sim \cD^{n+2}}\E_{h\sim \cA(Z)}\log \frac{p_{\cA(Z)}(h)}{p_{\cA([Z_{-1:2},z'_1,z'_2])}(h)} \\
	=& \E_{[Z,z_1']\sim \cD^{n+1}}\E_{h\sim \cA(Z)}\left[\log \frac{p_{\cA(Z)}(h)}{p_{\cA([Z_{-1},z'_1])}(h)}\right]\\
	&+\E_{[Z,z_1',z_2']\sim \cD^{n+2}}\E_{h\sim \cA(Z)}\left[\log\frac{p_{\cA([Z_{-1},z'_1])}(h)}{p_{\cA([Z_{-1:2},z'_1,z'_2])}(h)}\right]\\
	\leq&\epsilon + \E_{[Z,z_1',z_2']\sim \cD^{n+2}}\E_{h\sim \cA(Z)}\left[\log\frac{p_{\cA([Z_{-1},z'_1])}(h)}{p_{\cA([Z_{-1:2},z'_1,z'_2])}(h)}\right].
	\end{align*}
	The technical issue is that the second term does not have the correct distribution to take expectation over.
	By the property of $\cA$ being a posterior sampling algorithm, we can rewrite the second term of the above equation into
	$$
	\E_{Z\sim \cD^n,z'_1,z'_2\sim\cD}\E_{h\sim \cA(Z)}\left[\log p(z_2,h) - \log p(z'_2,h) - \log K + \log K'\right]
	$$
	where $K$ and $K'$ are normalization constants of $\exp(\log p(z_1',h)+\sum_{i=2}^n \log p(z_i,h))$ and $\exp(\log p(z_1',h)+\log p(z_2',h)+\sum_{i=3}^n \log p(z_i,h))$ respectively. The expected log-partition functions are the same so we can replace them with normalization constants of $\exp(\sum_{i=1}^n \log p(z_i,h))$ and $\exp(\log p(z_2',h)+\sum_{i\neq2} \log p(z_i,h))$. By adding and subtracting the missing log-likelihood functions on $z_1,z_3,...,z_n$, we get 
	\begin{align*}
	&\E_{Z\sim \cD^n,z'_1,z'_2\sim\cD}\E_{h\sim \cA(Z)}\left[\log\frac{p_{\cA([Z_{-1},z'_1])}(h)}{p_{\cA([Z_{-1:2},z'_2])}(h)}\right]\\
	=& \E_{Z\sim \cD^n,z'_1,z'_2\sim\cD}\E_{h\sim \cA(Z)}\left[\log\frac{p_{\cA(Z)}(h)}{p_{\cA([Z_{-2},z'_2])}(h)}\right]\leq \epsilon
	\end{align*}
	This completes the proof for $k=2$. Apply the same argument recursively by different decompositions of , we get the results for $k=3,...,n$.
	
	The second statement follows by the same argument with all ``$\leq$'' changed into ``$=$''. \qed
\end{proof}

\begin{proof}[Proof of Lemma~\ref{lem:composition}]
	\begin{align*}
	&\E_{h_1\sim \cA(Z)} \E_{h_2\sim \cB(Z,h_1)} \log \left[\frac{p_{\cB(Z,h_1)}(h_2)}{p_{\cB(Z',h_1)}(h_2)}\frac{p_{\cA(Z)}(h_1)}{p_{\cA(Z')}(h_1)}\right]\\
	=&\E_{h_1\sim \cA(Z)} \E_{h_2\sim \cB(Z,h_1)} \log \left[\frac{p_{\cB(Z,h_1)}(h_2)}{p_{\cB(Z',h_1)}(h_2)} \right]+\E_{h_1\sim \cA(Z)} \E_{h_2\sim \cB(Z,h_1)}\log\left[\frac{p_{\cA(Z)}(h_1)}{p_{\cA(Z')}(h_1)}\right]\\
	\leq &\sup_{h_1\in \Omega_\cA}\E_{h_2\sim \cB(S,h_1)} \log \left[\frac{p_{\cB(Z,h_1)}(h_2)}{p_{\cB(Z',h_1)}(h_2)} \right] + \E_{h_1\sim \cA(Z)}\log\left[\frac{p_{\cA(Z)}(h_1)}{p_{\cA(Z')}(h_1)}\right]
	\end{align*}
	Take $\sup$ over $Z$ and $Z'$ we get the adaptive composition result for KL-Privacy. Take $\E$ over $Z$ and $z$ such that $Z' = [Z_{-1},z]$, we get the adaptive composition result for On-Average KL-Privacy.\qed
\end{proof}

\begin{proof}[Proof of Lemma~\ref{lem:maxinfo}]
	By Lemma~12 in \citet{dwork2015generalization}, $I_\infty(\cA,n) = \sup_{Z,Z'\in \cZ^n} D_\infty(\cA(Z)||\cA(Z'))$.
	\begin{align*}
	D_\infty(\cA(Z)||\cA(Z')) &= \sup_h \log \frac{p_{\cA(Z)}(h)}{p_{\cA(Z')}(h)} \\
	&=\sup_h \sum_{i=1}^n\log \frac{p_{\cA([Z_{-1:(i-1)},Z'_{1:(i-1)}])}(h)}{p_{\cA([Z_{-1:(i)},Z'_{1:(i)}])}(h)}\\
	&\geq \E_{h\sim \cA(Z)} \sum_{i=1}^n \log \frac{p_{\cA([Z_{-1:(i-1)},Z'_{1:(i-1)}])}(h)}{p_{\cA([Z_{-1:(i)},Z'_{1:(i)}])}(h)}\\
	&\geq \sum_{i=1}^n \E_{h\sim \cA(Z)} \log \frac{p_{\cA([Z_{-1:(i-1)},Z'_{1:(i-1)}])}(h)}{p_{\cA([Z_{-1:(i)},Z'_{1:(i)}])}(h)}\\
	&= \sum_{i=1}^n \E_{h\sim \cA(Z)}(\log p(z_i;h) - \log p(z'_i;h) - \log K_i  + \log K_i')
	\end{align*}
	where $K_i$ and $K_i'$ are normalization constants for distribution $p_{\cA([Z_{-1:(i-1)},Z'_{1:(i-1)}])}(h)$
	and $p_{\cA([Z_{-1:(i)},Z'_{1:(i)}])}(h)$ respectively.
	
	Take expectation over $Z$ and $Z'$ on both sides, by symmetry, the expected normalization constants are equal no matter which size $n$ subset of $[Z,Z']$ this posterior distribution $h$ is defined over. Define $Z^{(i)}:=[z_1,...,z_{i-1},z'_{i},z_{i+1},...,z_n]$. Let $K$ be the normalization constant of $\cA(Z)$ and $K^{(i)}$ be the normalization constant of $\cA(Z^{(i)})$.
	We get
	\begin{align*}
	&\E_{Z,Z'\sim \cD^n} D_\infty(\cA(Z)||\cA(Z')) \geq \E_{Z,Z'\sim \cD^n} \sum_{i=1}^n \E_{h\sim \cA(Z)}(\log p(z_i;h) - \log p(z'_i;h))\\
	=& \E_{Z,Z'\sim \cD^n} \sum_{i=1}^n \E_{h\sim \cA(Z)}\left[\sum_{j=1}^n\log p_h(z_j) - \sum_{j\neq i}\log p_h(z_j)-\log p_h(z_i') - \log K  + \log K^{(i)}\right]\\
	=&\E_{Z,Z'\sim \cD^n} \sum_{i=1}^n \E_{h\sim \cA(Z)}\left[\log \frac{p_{\cA(Z)}(h)}{p_{\cA(Z^{(i)})}(h)}\right]\\
	=&\sum_{i=1}^n \E_{Z\sim \cD^n,z'_i\sim \cD} \E_{h\sim \cA(Z)}\left[\log \frac{p_{\cA(Z)}(h)}{p_{\cA(Z^{(i)})}(h)}\right]
	\end{align*}
	Note that  
	$$\text{RHS} = n \E_{Z\sim \cD^n,z\sim\cD} \text{KL}(\cA(Z)\|\cA([Z_{-1},z])), $$
	and 
	$$\text{LHS} = \E_{Z,Z'\sim \cD^n} D_\infty(\cA(Z)\|\cA(Z')) \leq \sup_{Z,Z'\in \cZ^n} D_\infty(\cA(Z)\|\cA(Z')) = I_\infty(\cA,n).$$
	Collecting the three systems of inequalities above, we get that $\cA$ is $k/n$-On-Average-KL-Privacy as claimed. \qed
\end{proof}

\begin{proof}[Proof of Lemma~\ref{lem:mutualinfo}]
	Denote	$p(\cA(Z))  = p(h|Z)$. $p(h,Z) = p(h|Z) p(Z)$. The marginal distribution of $h$ is therefore 
	$p(h)=\int_Z p(h,Z) dZ = \E_{Z} p(\cA(Z)) $. By definition,
	\begin{align*}
	& I(\cA(Z);Z) = \E_{Z}\E_{h|Z} \log \frac{p(h|Z) p(Z)}{p(h)p(Z)} \\
	=& \E_{Z}\E_{h|Z} \log p(h|Z) - \E_{Z}\E_{h|Z}\log \E_{Z'} p(h|Z')\\
	=& \E_{Z}\E_{h|Z} \log p(h|Z) -  \E_{Z,Z'}\E_{h|Z} \log p(h|Z')\\
	& +\E_{Z,Z'}\E_{h|Z} \log p(h|Z')- \E_{Z}\E_{h|Z}\log \E_{Z'} p(h|Z')\\
	=&\E_{Z,Z'}\E_{h|Z} \log \frac{p(h|Z)}{p(h|Z')}+\E_{Z,Z'}\E_{h|Z} \log p(h|Z')- \E_{Z}\E_{h|Z}\log \E_{Z'} p(h|Z')\\
	=& D_{\mathrm{KL}}(\cA(Z),\cA(Z')) + \E_{Z}\E_{h|Z}\left[ \E_{Z'}\log p(\cA(Z')) - \log \E_{Z'}p(\cA(Z')) \right]\\
	\leq& D_{\mathrm{KL}}(\cA(Z),\cA(Z')).
	\end{align*}
	The last line follows from Jensen's inequality. \qed
\end{proof}

\bibliographystyle{splncsnat}
\bibliography{personal-privacy}

\end{document}